\PassOptionsToPackage{table}{xcolor}
\documentclass[lettersize,journal]{IEEEtran}

\usepackage{amsmath,amsfonts}
\usepackage{amssymb}
\usepackage{mathtools}
\usepackage{amsthm}

\usepackage{algorithm}
\usepackage{algorithmic}
\usepackage{booktabs}
\usepackage{graphicx}
\usepackage[table]{xcolor}

\usepackage{array}
\usepackage{tabularx}
\usepackage{multirow}
\usepackage{booktabs}
\usepackage{graphicx}

\usepackage{xcolor}
\usepackage{colortbl}
\usepackage{arydshln}

\usepackage[caption=false,font=normalsize,labelfont=sf,textfont=sf]{subfig}
\usepackage{textcomp}
\usepackage{stfloats}
\usepackage{url}
\usepackage{verbatim}
\usepackage{microtype}
\usepackage{bbm}
\usepackage{cite}

\definecolor{letheblue}{RGB}{218,232,252}

\usepackage{hyperref}
\usepackage[capitalize,noabbrev]{cleveref}

\graphicspath{{figs/}}
\newcommand{\bw}{\boldsymbol{w}}

\theoremstyle{plain}
\newtheorem{theorem}{Theorem}[section]
\newtheorem{proposition}[theorem]{Proposition}

\theoremstyle{definition}

\theoremstyle{remark}

\begin{document}

\title{\textsc{Lethe}: Principled Dual-Stream Update for Persistent Knowledge Erasure in Federated Unlearning}

\author{Wentai~Wu,~\IEEEmembership{Member,~IEEE,}
        Hanwei~Tan*,
        Yijun~Quan,
        Haixia Peng,
        Ligang~He,~\IEEEmembership{Member,~IEEE,}
        Bin Yang,
        C.~L.~Philip Chen,~\IEEEmembership{Fellow,~IEEE}

\thanks{W. Wu and H. Tan are with the Department of Computer Science, College of Information Science and Technology, Jinan University, Guangzhou 510632, China. e-mails: wentaiwu@jnu.edu.cn, tanhanwei@stu2025.jnu.edu.cn.}%
\thanks{Y. Quan is with WMG, University of Warwick, Coventry CV4 7AL, United Kingdom. e-mail: Yijun.quan@warwick.ac.uk.}%
\thanks{H. Peng is with the School of Information and Communications Engineering at Xi’an Jiaotong University, China. e-mail: haixia.peng@xjtu.edu.cn.}%
\thanks{L. He is with the Department of Computer Science, University of Warwick, Coventry CV4 7AL, United Kingdom. e-mail: ligang.he@warwick.ac.uk.}%
\thanks{B. Yang is with the School of Data Science and Engineering, East China Normal University, Shanghai 200050, China. e-mail: byang@cs.aau.dk.}%
\thanks{C.L.P. Chen is with the School of Computer Science and Engineering, South China University of Technology. e-mail:philipchen@scut.edu.cn}%
\thanks{Manuscript received XX XX, 2026; revised XX XX, 2026.}
\thanks{*Corresponding author: Hanwei Tan}
} 

\markboth{IEEE Transactions on Knowledge and Data Engineering,~Vol.~XX, No.~XX, XX~2026}%
{Wu \MakeLowercase{\textit{et al.}}: \textsc{Lethe}: Adapter-Augmented Dual-Stream Update for Persistent Knowledge Erasure in Federated Unlearning}


\maketitle

\begin{abstract}
Federated unlearning (FU) aims to erase knowledge from a global model. Existing studies commonly assume that federated collaboration terminates after unlearning, overlooking a deployment-realistic scenario where training continues on the remaining clients after deletion requests are fulfilled. In this work, we identify a critical failure mode, termed \emph{knowledge resurfacing}, revealing that continued training on retained data alone can reactivate unlearned knowledge in a few rounds. Empirically, we demonstrate that many state-of-the-art FU methods are prone to knowledge resurfacing. Then we propose \textsc{Lethe}, a novel unlearning method for persistent knowledge erasure under federated settings. \textsc{Lethe} in each iteration operates on a forget stream from the unlearning client and a retain stream from the retained clients. It redirects the unlearning steps towards the region where the two steams anti-align so that retained-data training is discouraged from moving back towards the forgotten knowledge. Consequently, \textsc{Lethe} ensures stronger unlearning persistence during subsequent federated training. Extensive experiments across diverse models, datasets, and unlearning levels validate that \textsc{Lethe} supports all levels of unlearning in a unified manner across both CV and NLP tasks, demonstrating consistently low RR (below $1\%$ in most cases) even after extremely long horizon of follow-up training.
\end{abstract}

\begin{IEEEkeywords}
Federated unlearning, federated learning, approximate unlearning, knowledge resurfacing, persistent forgetting.
\end{IEEEkeywords}

\begin{figure}[t]
    \centering
    \includegraphics[width=0.98\columnwidth]{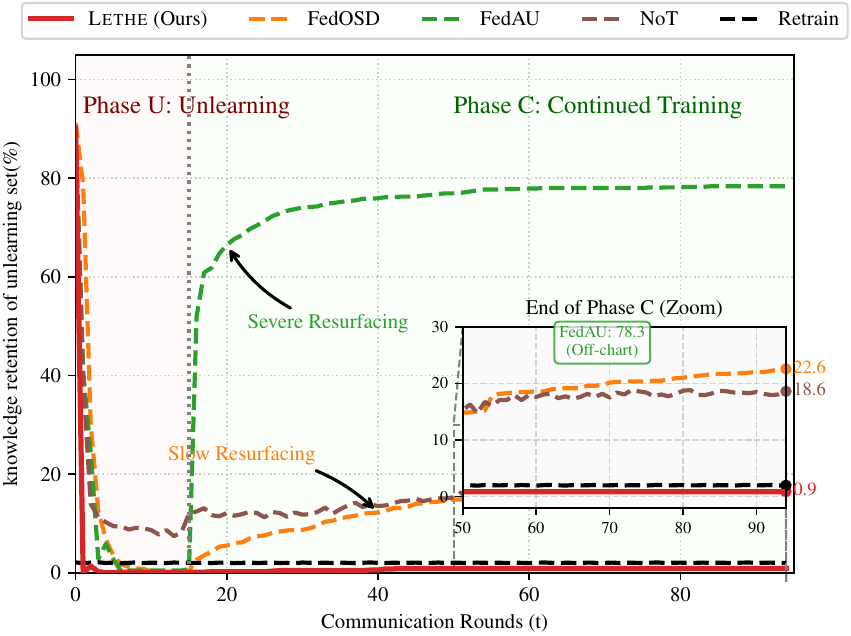}
    \caption{
    Knowledge retention on the forget set during unlearning (phase U) and continued training (phase C). Lower values indicate stronger removal of target knowledge. Existing methods quickly reduce knowledge retention during unlearning, but the model relearns what is supposed to be deleted during continued training, i.e., knowledge resurfacing. In contrast, our \textsc{Lethe} keeps unlearned knowledge deactivated throughout continued training.
    }
    \label{fig:top_right_curve}
\end{figure}

\section{Introduction}
\IEEEPARstart{F}{ederated} learning (FL) enables distributed clients to train models without sharing raw data~\cite{mcmahan2017communication}. Although FL does not expose user data explicitly, the model through federated training learns user-specific knowledge from private data~\cite{liu2023surveyfu,liu2023surveyfu1}. This means that privacy leakage could still happen even if the user opts out of the federation. To strictly comply with privacy regulations, such as ``the right to be forgotten'' in GDPR~\cite{garg2020formalizing,liu2023surveyfu1}, federated unlearning (FU) emerged as a promising approach to removing knowledge learned by the model from specific data, such that the model behaves as if it had never seen those data~\cite{liu2023surveyfu,liu2023surveyfu1}. 
In real-world federated deployment, \textbf{unlearning requests may arrive in the middle of a long-running training process}, which means that training carries on after the unlearning is done. This makes FU a persistent requirement rather than a one-time operation. 
If subsequent training on the retained data reactivates the removed knowledge, the trustworthiness of the unlearning will be significantly compromised. Therefore, a new challenge for FU is about how to maintain the unlearning effect under subsequent training. 

Retraining from scratch on the remaining data is the gold standard for unlearning, as it trains a model that has never seen the unlearning data and thus can fundamentally avoid knowledge resurfacing. However, it is often prohibitive in terms of computation and communication at scale. In the literature, mainstream unlearning strategies can be broadly categorized into exact unlearning and approximate unlearning~\cite{thudi2022necessity,suriyakumar2022algorithms,liu2023surveyfu1}.

Exact unlearning seeks to produce an unlearned model exactly equivalent to retraining~\cite{thudi2022necessity,suriyakumar2022algorithms}. It usually accelerates retraining by storing historical gradients or using inverse-Hessian-based methods. However, this often incurs substantial storage overhead and computational cost. In contrast, approximate unlearning aims to quickly reduce knowledge retention to a level comparable to retraining~\cite{suriyakumar2022algorithms,thudi2022necessity}, while lowering time and resource consumption. It usually does not require additional storage and can complete the unlearning process within only a few rounds.

This distinction also suggests that approximate unlearning does not necessarily imply complete erasure of the target knowledge: a model may reduce the immediate influence of the forget set but it still preserves an unexpected tendency to re-activate the unlearned knowledge during continued training~\cite{lynch2024eight}.  Prior studies~\cite{lynch2024eight,wang2025invariance} have shown that unlearned models may still retain residual knowledge in the vicinity of forget samples, remain vulnerable to poisoning effects, or even recover the supposedly forgotten behavior after minimal re-training. As shown in Figure~\ref{fig:top_right_curve}, although approximate unlearning methods can quickly minimize the knowledge retention of the target data, \textbf{continued training on the retained data alone can cause the removed knowledge to re-activate}, which is reflected by the model's performance recovery on the unlearned patterns. We refer to this phenomenon as \textit{knowledge resurfacing}~\cite{lynch2024eight}. By contrast, a model retrained solely on the retained dataset does not exhibit this issue. This suggests that a central challenge for approximate unlearning is not only to remove target knowledge during unlearning, but also to sustain the effect of unlearning during subsequent training and prevent knowledge resurfacing. 

\begin{figure*}[t]
  \centering
  \includegraphics[width=\textwidth]{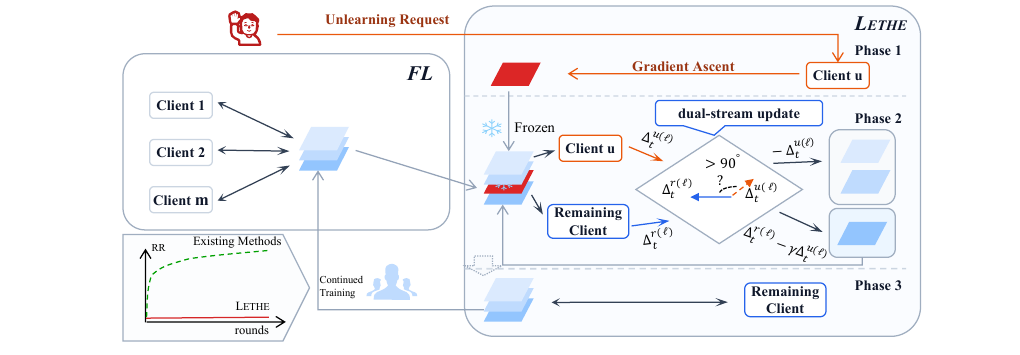}
    \caption{\textbf{\textsc{Lethe} for persistent federated unlearning.}
    \textsc{Lethe} acts in the unlearning phase via dual-stream update rectification. 
    The forget stream captures the target-knowledge recovery direction from the unlearning data, while the retain stream preserves utility from retained data. 
    By de-correlating the two streams via update rectification, \textsc{Lethe} suppresses potential knowledge resurfacing during subsequent federated training.}
  \label{fig:framework}
\end{figure*}

Moreover, although LLM unlearning has recently attracted substantial attention in centralized settings, federated LLM unlearning remains comparatively under-explored~\cite{zhang2026oblivionis}. This gap is important because federated LLM training is typically conducted via parameter-efficient fine-tuning, such as LoRA~\cite{hu2022lora}, rather than full-parameter optimization, due to the high communication and local training costs of large models~\cite{zhang2026oblivionis}. This difference in training paradigm calls for FU methods and evaluation protocols that can be adapted to LLM-based NLP unlearning settings.


To address these challenges, we propose \textsc{Lethe}\footnote{Named after ``the river of oblivion'' in Greek mythology.}, a federated unlearning method applicable to both CV tasks and LLM unlearning with persistent effect of knowledge erasure. 
As shown in Fig.~\ref{fig:framework}, \textsc{Lethe} operates on a \emph{forget stream} from the unlearning data and a \emph{retain stream} from the retained data. 
\textbf{By constructing rectified updates, \textsc{Lethe} does not merely suppress the forgot signal at the current unlearning step but to redirect the subsequent update so that retained-data training is discouraged from moving back towards the forgotten knowledge, thereby mitigating knowledge resurfacing during subsequent training.}
Since the rectification mechanism is on the updates derived from the two streams, \textsc{Lethe} is agnostic to how the forget set is defined, enabling the same mechanism to \textbf{generalize across sample-, class-, and client-level unlearning on both CV and LLM tasks}.

We further analyze what makes an unlearned model resistant to knowledge resurfacing from three complementary perspectives in Sec.~\ref{sec:analysis}. 
These analyses characterize persistent unlearning from three aspects: 
representation structure reveals whether the target knowledge remains encoded in the model's internal feature space after unlearning; 
rollback tendency measures whether retained-data updates during continued training pull the model back toward the original model; 
and LMC analysis examines whether a smooth low-loss path exists for the removed knowledge to be recovered. 
In addition, we provide a first-order theoretical interpretation in Sec.~\ref{sec:theory-analysis}, showing that the rectified update is non-positively aligned with the forget stream while being non-negatively aligned with the retain stream, which explains how \textsc{Lethe} promotes unlearning while preserving retained utility.

Our key contributions are summarized as follows:

\begin{itemize}
\item \textbf{Metric and analysis:} 
We introduce \emph{Knowledge Retention (KR)} and \emph{Resurfacing Rate (RR)} to measure how effective the unlearning is and to what extent the unlearned knowledge resurfaces during continued training. 

\item \textbf{Methodology:} We propose \textsc{Lethe}, a federated unlearning method based on a three-stage unlearning design. 
The core stage adopts dual-stream update rectification, which redirects the unlearning steps and suppresses knowledge resurfacing by driving the model away from the subspace where the two streams positively correlate. 
To ground our design, we further provide representation, rollback tendency, and LMC analyses, along with a first-order theoretical interpretation.

\item \textbf{Evaluation:} \textsc{Lethe} consistently achieves near-zero resurfacing (RR $<$ 1\% in most cases) across various CV and NLP tasks, outperforming existing baselines while maintaining competitive utility.
\end{itemize}

\section{Related Work}

\paragraph{Federated unlearning}
Approaches to federated unlearning are generally categorized into exact and approximate unlearning~\cite{thudi2022necessity,suriyakumar2022algorithms,liu2023surveyfu1}.

Exact unlearning updates the model equivalent to retraining from scratch on the retained data after removing the data to be unlearned. FedEraser~\cite{federaser} accelerates retraining by retaining historical client gradients to reduce the required training rounds. 
FATS~\cite{tao2024communication} makes the training process total variation(TV)-stable, which enables more efficient exact unlearning when a deletion request occurs.
Quan et al.~\cite{quan2026exact} propose an exact federated continual unlearning method for ridge heads on frozen foundation models, leveraging additive sufficient statistics to handle add and delete requests efficiently.

Approximate unlearning aims to achieve efficient unlearning by updating model parameters without retraining. FedAU~\cite{fedau} introduces an auxiliary unlearning module during training and adopts a linear neutralization operation during the unlearning stage to offset the influence of the target data on the model. NoT~\cite{fednot} performs unlearning by applying weight negation to model parameters followed by subsequent fine-tuning. FedOSD~\cite{pan2025fedosd} constructs an orthogonal steepest descent direction that minimizes conflicts with the gradients of other clients for client-level unlearning. FedU~\cite{wang2024fedu} mainly targets the unlearning of user-specified samples, allowing the user who submits the unlearning request to locally estimate the influence of the removed samples on the global model and subtract it from the model. 

\paragraph{LLM unlearning}

LLM unlearning refers to a branch of studies on erasing specific knowledge, usually carried in text corpora, from pre-trained language models~\cite{wang2025invariance,zhang2026oblivionis}. The most representative approach is gradient ascent (GA)~\cite{grandent-ac}, which maximizes the loss on the forget data to reduce the generation probability of target sequences. NPO~\cite{NPO} treats forget responses as negative preferences and optimizes a lower-bounded objective to mitigate the catastrophic collapse often observed in pure GA, while SimNPO~\cite{simNPO} further removes the reliance on a reference model through a simpler reference-free formulation.

Overall, existing LLM unlearning methods are largely developed and evaluated in centralized settings on NLP-oriented generative tasks. As a result, they are not readily transferable to image-based or federated unlearning scenarios, where the underlying model architectures, optimization pipelines, and evaluation protocols differ substantially.

\paragraph{Knowledge resurfacing}
Recent studies suggest that approximate unlearning often fails to provide durable removal of target knowledge~\cite{lynch2024eight,lucki2024adversarial,wang2025invariance}. In the LLM setting, prior work has shown that supposedly erased knowledge can remain latent and be recovered through adversarial elicitation, weight tampering, or downstream fine-tuning~\cite{lynch2024eight,lucki2024adversarial}. In particular, Wang et al.~\cite{wang2025invariance} show that existing LLM unlearning methods are highly sensitive to downstream fine-tuning on unrelated tasks, and propose an invariance-based framework inspired by invariant risk minimization to improve robustness against such recovery. 
A similar issue has also been observed in federated unlearning. Pan et al.~\cite{pan2025fedosd} note that post-training can drive the model back toward the pre-unlearning state, and therefore introduces an orthogonal steepest-descent direction together with gradient projection to mitigate this reversal. Siddiqui et al.~\cite{siddiqui2025dormant} show that forget-set accuracy can recover dramatically even when the model is fine-tuned only on the retain set, which is then interpreted with weight-space properties such as linear mode connectivity. Additionally, Patel et al.~\cite{patel2025learning} argue that conflicts between retention and unlearning gradients are a key source of incomplete forgetting and retained-utility degradation. These findings collectively indicate that the central challenge of approximate unlearning is not merely immediate forgetting, but persistent forgetting under subsequent training.

Table~\ref{tab:baseline_positioning} summarizes the scope of representative unlearning baselines. Existing FU methods mainly focus on CV settings, while NLP unlearning methods are usually designed for centralized scenarios. In contrast, \textsc{Lethe} targets persistent FU across both CV and NLP tasks, supports sample-, class-, and client-level unlearning, and explicitly suppresses knowledge resurfacing during continued training.

\begin{table}[t]
    \centering
    \caption{Positioning of unlearning methods across different settings. 
    ``Fed.'' indicates whether the method is designed for federated unlearning, and
    ``Persist.'' for whether the method can suppress knowledge resurfacing during continued training.}
    \label{tab:baseline_positioning}
    \scriptsize
    \setlength{\tabcolsep}{3pt}
    \renewcommand{\arraystretch}{1.05}
    \resizebox{\columnwidth}{!}{
    \begin{tabular}{llccccc}
        \toprule
        \textbf{Domain} 
        & \textbf{Method} 
        & \textbf{Fed.} 
        & \textbf{Sample} 
        & \textbf{Class} 
        & \textbf{Client} 
        & \textbf{Persist.} \\
        \midrule

        \multirow{5}{*}{CV}
        
        & FedEraser ~\cite{federaser}
        & \checkmark 
        & $\times$ 
        & $\times$ 
        & \checkmark 
        & \checkmark \\

        & FedAU ~\cite{fedau}
        & \checkmark 
        & \checkmark 
        & \checkmark 
        & \checkmark 
        & $\triangle$ \\

        & FedOSD ~\cite{pan2025fedosd}
        & \checkmark 
        & $\times$ 
        & $\times$ 
        & \checkmark 
        & $\times$ \\

        & NoT ~\cite{fednot}
        & \checkmark 
        & \checkmark 
        & \checkmark 
        & \checkmark 
        & $\triangle$ \\

        & \cellcolor{letheblue}\textsc{Lethe}
        & \cellcolor{letheblue}\checkmark 
        & \cellcolor{letheblue}\checkmark 
        & \cellcolor{letheblue}\checkmark 
        & \cellcolor{letheblue}\checkmark 
        & \cellcolor{letheblue}\checkmark \\
        \midrule

        \multirow{3}{*}{NLP}

        & NPO~\cite{NPO}
        & $\times$ 
        & \checkmark 
        & - 
        & $\times$ 
        & $\times$ \\

        & simNPO ~\cite{simNPO}
        & $\times$ 
        & \checkmark 
        & - 
        & $\times$ 
        & $\times$ \\

        & \cellcolor{letheblue}\textsc{Lethe}
        & \cellcolor{letheblue}\checkmark 
        & \cellcolor{letheblue}\checkmark 
        & \cellcolor{letheblue}-
        & \cellcolor{letheblue}\checkmark 
        & \cellcolor{letheblue}\checkmark \\
        \bottomrule
    \end{tabular}
    }
    \vspace{0.5mm}
    
    \footnotesize{
    \(\checkmark\): supported/effective; 
    \(\triangle\): partially supported ; 
    \(\times\): not supported.
    }
\end{table}

\section{Preliminaries and Problem Statement}
\label{sec:preliminaries}
We consider an FU scenario where a client requests unlearning with regard to the global model which has been trained through FL. The unlearning operation involves the whole federation and is followed by continued FL over the remaining clients. This setting is practical in production scenarios where unlearning requests arrive amid long-term federated training.


\subsection{Problem Formulation}
Let $\mathcal{D}_k$ be the local data of client $k$ in a federation $\mathcal{K}$, and $\mathcal{D}=\cup_{k\in \mathcal{K}}\mathcal{D}_k$. Let $n_k=|\mathcal{D}_k|$ and $N=\sum_{k\in\mathcal{K}} n_k$ be the number of samples locally and globally, respectively.
Let $\boldsymbol{w}^{(\ell)}\in\mathbb{R}^{d}$ denote the parameter vector of layer $\ell$. Let $\mathcal{A}$ denote the federated learning algorithm and $\bw_0$ be an initialized model.
The original global model (before any unlearning request) is
$\bw_{\mathrm{ori}}=\mathcal{A} (\bw_0, \mathcal{D})$. For example, $\mathcal{A}$ can be the standard FedAvg \cite{mcmahan2017communication} that updates the global model by:
\begin{equation}
\boldsymbol{w}_{t+1}^{(\ell)}=\boldsymbol{w}_{t}^{(\ell)} + \sum_{k\in\mathcal{K}_r} q_k\,\Delta \boldsymbol{w}_{t,k}^{(\ell)},
\label{eq:layer_agg_retrain}
\end{equation}
where $\Delta \boldsymbol{w}_{t,k}^{(\ell)} \triangleq \boldsymbol{w}_{t,k}^{(\ell)}-\boldsymbol{w}_{t}^{(\ell)}$
is the local update from client $k$, and $q_k=\frac{n_k}{N}$ is the normalized aggregation weight over remaining clients.

Let $\mathcal{D}_{u}\subset \mathcal{D}$ denote the designated unlearning set, and let $\mathcal{D}_{r}=\mathcal{D}\setminus \mathcal{D}_{u}$ denote the retained set.
The unlearning algorithm $\mathcal{U}$ updates $\bw_{\mathrm{ori}}$ so as to remove knowledge of $\mathcal{D}_{u}$.
Given the original global model $\bw_{\mathrm{ori}}$, the unlearning set $\mathcal{D}_{u}$, and the retained set $\mathcal{D}_{r}$, the unlearned model is obtained by
$\bw_{\mathrm{un}}=\mathcal{U}(\bw_{\mathrm{ori}}, \mathcal{D}_{u}, \mathcal{D}_{r}).$
The unlearning algorithm aims to maximize the loss on the designated unlearning set $\mathcal{D}_{u}$:

\begin{equation}
\max\; \mathcal{L}(\bw_{un};\mathcal{D}_{u}),
\end{equation}
while preserving the knowledge learned from the retained set $\mathcal{D}_{r}$ as much as possible.

To set a reference, we consider retraining from scratch using only the retained set data $\mathcal{D}_{r}$ as the gold-standard target of unlearning. Specifically, the retrained model is defined as $\bw_{\mathrm{re}}=\mathcal{A}(\bw, \mathcal{D}_{r})$. The goal of unlearning is to make $\bw_{\mathrm{un}}$ behave as closely as possible to $\bw_{\mathrm{re}}$.
Since we consider multiple stages in the federation lifecycle, we summarize the notations of model states in Table~\ref{tab:model_states} for clarity.

Federated unlearning can be described from two orthogonal aspects: the task domain and the unlearning granularity. 
In this work, we consider both CV and NLP tasks and the granularities of unlearning as follows:

(1) \emph{Sample unlearning}: remove the knowledge associated with specified samples from a client.

(2) \emph{Client unlearning}: erase the knowledge contributed by all local data of a designated client.

(3) \emph{Class unlearning}: remove the knowledge associated with a target class, where the target data consist of all samples belonging to that class.

Sample- and client-level unlearning are applicable to both CV and NLP tasks, while class-level unlearning makes sense in classification tasks. 
In this paper, we evaluate all three granularities in CV tasks and evaluate sample-level knowledge unlearning in NLP tasks.

\begin{table}[t]
\centering
\caption{Notation of model states.}
\label{tab:model_states}
\renewcommand{\arraystretch}{1.15}
\begin{tabularx}{\columnwidth}{@{}>{\centering\arraybackslash}p{0.23\columnwidth} X@{}}
\toprule
\textbf{Notation} & \textbf{Description} \\
\midrule
$\boldsymbol{w}_{\mathrm{ori}}$ 
& Original global model trained on the full dataset $\mathcal{D}$ before unlearning. \\

$\boldsymbol{w}_{\mathrm{un}}$ 
& Model immediately after removing target knowledge from $\mathcal{D}_u$. \\

$\boldsymbol{w}_{\mathrm{re}}$ 
& Clean retraining baseline trained from scratch using only $\mathcal{D}_r$. \\
\bottomrule
\end{tabularx}
\end{table}

\subsection{Measures of Knowledge Resurfacing}
Since CV and NLP tasks use different task-specific evaluation protocols, directly comparing their raw metrics can be misleading. 
To provide a unified view of unlearning effectiveness, we map these task-specific measurements into a common notion, termed \emph{knowledge retention} (KR). 
KR measures how much knowledge associated with the unlearning set $\mathcal{D}_u$ is still retained in the model parameterized by $\boldsymbol{w}$. 
A lower KR indicates more effective removal of the target knowledge associated with $\mathcal{D}_u$.

\textbf{For classification tasks}, KR is instantiated as the prediction accuracy on the unlearning set~\cite{Chundawat_Tarun_Mandal_Kankanhalli_2023}:
\begin{equation}
\mathrm{KR_{cls}}(\bw,\mathcal{D}_u)
=
\frac{1}{|\mathcal{D}_u|}
\sum_{(x,y)\in\mathcal{D}_u}
\mathbbm{1}
\big[
f_{\bw}(x)=y
\big].
\end{equation}

\textbf{For LLM unlearning}, we instantiate KR following the truth-ratio-style evaluation~\cite{tofu2024,zhang2026oblivionis}. 
For each unlearning sample, we compare the probability of a perturbed answer $\tilde{y}_i$ with that of the original correct answer $y_i$ using the ratio $TR_i(\bw)=\frac{P_w(\tilde y_i \mid x_i)}{P_w(y_i \mid x_i)+\varepsilon}$, where $\varepsilon$ is a small constant for numerical stability. 
Then, KR is computed as:
\begin{equation}
\mathrm{KR_{llm}}(\bw,\mathcal{D}_u)
=
1-
\frac{1}{|\mathcal{D}_u|}
\sum_{i=1}^{|\mathcal{D}_u|}
\min\left(
TR_i(\bw),\frac{1}{TR_i(\bw)}
\right).
\label{eq:llm_kr}
\end{equation}

Let $w_t$ denote the unlearning model after $t$ rounds of continued training on $\mathcal{D}_r$. 
We define $\mathrm{KR}^{\mathrm{ori}}=\mathrm{KR}(\bw_{\mathrm{ori}},\mathcal{D}_u)$ and $\mathrm{KR}_{t}=\mathrm{KR}(\bw_t,\mathcal{D}_u)$. 
Let $\mathrm{KR}^{\mathrm{re}}$ denote the KR of the clean retraining reference. 
The \textbf{resurfacing rate} (\textbf{RR}) is defined as:
\begin{equation}
\mathrm{RR} \triangleq
\frac{
\max\!\left(0,\ \max_t \mathrm{KR}_{t}-\mathrm{KR}^{\mathrm{re}}\right)
}{
\mathrm{KR}^{\mathrm{ori}}-\mathrm{KR}^{\mathrm{re}}
}
\times 100\%.
\label{eq:rr}
\end{equation}

A smaller RR indicates better approximation to clean retraining and thus implies stronger unlearning persistence. 

\section{Methodology}
\label{sec:method}
To achieve persistent unlearning, \textsc{Lethe} follows an ``\textbf{amplify-rectify-restore}'' three-stage pipeline that is basically model- and task-agnostic. At its core lies the dual-stream update rectification mechanism by which \textsc{Lethe} redirects the optimization in the direction that consistently induces forgetting while discouraging the retain-set update from re-activating forgotten knowledge, thereby mitigating knowledge resurfacing during subsequent training.

\subsection{Phase 1: Amplify with a temporary adapter}

Given the original global model $\boldsymbol{w}_{\mathrm{ori}}$, \textsc{Lethe} adopts a temporary adapter $\boldsymbol{\phi}$ to amplify the unlearning signal. 
The backbone parameters $\boldsymbol{w}_{\mathrm{ori}}$ are frozen, and only the temporary adapter is trained on the unlearning data $\mathcal{D}_u$ by gradient ascent. 
For $s=0,\ldots,S_{\phi}-1$, we update
\begin{equation}
\boldsymbol{\phi}^{s+1}
=
\boldsymbol{\phi}^{s}
+
\eta_{\phi}
\nabla_{\boldsymbol{\phi}}
\mathcal{L}(\boldsymbol{w}_{\mathrm{ori}}, \boldsymbol{\phi}^{s}; \mathcal{D}_u),
\label{eq:probe_train}
\end{equation}
where $\eta_{\phi}$ is the adapter learning rate. The adapter training stops after $S_{\phi}$ steps, and the resulting temporary adapter is denoted as
$\boldsymbol{\phi}^{*}=\boldsymbol{\phi}^{S_{\phi}}$.

The temporary adapter serves only as a signal amplifier. 
When used, it is uploaded to the server once, broadcast to clients once, kept frozen during unlearning, and discarded after rectification. 
In the NLP tasks, \textsc{Lethe} can still be effective without introducing this adapter, indicating that the core mechanism lies in dual-stream update rectification rather than the adapter itself. 
The temporary adapter mainly helps stabilize the forget stream when the unlearning signal is sparse or weak (e.g., at sample-level unlearning).

\subsection{Phase 2: Rectify via dual-stream update}

At rectification round $t$, the server broadcasts the current backbone $\boldsymbol{w}_t$ together with the frozen temporary adapter $\boldsymbol{\phi}^{*}$ to all participating clients. 
All clients perform local training using the same composite model $(\boldsymbol{w}_t,\boldsymbol{\phi}^{*})$, where $\boldsymbol{\phi}^{*}$ remains fixed and only the backbone parameters are updated. 
This produces two update streams: the forget stream from $\mathcal{D}_u$ and the retain stream from retained clients.

\paragraph*{1) \textbf{Forget stream}}
The unlearning client computes the forget-stream update on $\mathcal{D}_u$:
\begin{equation}
\Delta_t^{u}
\triangleq
\mathrm{LocalTrain}(\boldsymbol{w}_t,\mathrm{frozen}\ \boldsymbol{\phi}^{*};\mathcal{D}_u)
-
\boldsymbol{w}_t .
\label{eq:forget_stream}
\end{equation}

This stream provides the update direction associated with the target knowledge to be removed.

\paragraph*{2) \textbf{Retain stream}}
In parallel, retained clients compute the retain-stream update on their local retained data. 
Let $\mathcal{B}_t \subseteq \mathcal{C}_r$ denote the sampled retained clients at round $t$. 
The server aggregates their backbone updates as
\begin{equation}
\Delta_t^{r}
\triangleq
\sum_{k\in\mathcal{B}_t}
p_k
\left(
\mathrm{LocalTrain}(\boldsymbol{w}_t,\mathrm{frozen}\ \boldsymbol{\phi}^{*};\mathcal{D}_k)
-
\boldsymbol{w}_t
\right),
\label{eq:retain_stream}
\end{equation}
where $p_k$ is the aggregation weight of client $k$. 
This stream provides the utility-preserving update direction.

\paragraph*{3) \textbf{Layer-wise rectification}}
After obtaining the forget stream $\Delta_t^{u}$ and the retain stream $\Delta_t^{r}$, the server performs dual-stream layer-wise rectification. 
For each layer $\ell$, the server first computes the alignment between the two streams:
\begin{equation}
s_t^{(\ell)}
\triangleq
\left\langle
\Delta_t^{r,(\ell)},
\Delta_t^{u,(\ell)}
\right\rangle .
\label{eq:sim_def}
\end{equation}

A positive $s_t^{(\ell)}$ indicates that the retain stream is positively aligned with the forget stream, which implies adverse correlation that will cause knowledge resurfacing after unlearning. We use the following rule to rectify the actual update:
\begin{equation}
\widetilde{\Delta}_t^{(\ell)}
=
\begin{cases}
\Delta_t^{r,(\ell)} - \gamma \Delta_t^{u,(\ell)},
& \text{if } s_t^{(\ell)} > 0, \\[0.2em]
-\Delta_t^{u,(\ell)},
& \text{if } s_t^{(\ell)} \le 0,
\end{cases}
\label{eq:layerwise_rectify}
\end{equation}
where $\gamma$ controls the strength in suppressing the forget-stream direction. 
Finally, the server updates each layer by
\begin{equation}
\boldsymbol{w}_{t+1}^{(\ell)}
=
\boldsymbol{w}_{t}^{(\ell)}
+
\widetilde{\Delta}_t^{(\ell)}.
\label{eq:update}
\end{equation}

The above rectification rule is shared by different unlearning granularities, while the construction of the two streams depends on the unlearning request. 
For client-level unlearning, all data from the unlearning client are used to construct the forget stream, and data from the remaining clients are used to construct the retain stream. 
For sample-level unlearning, only the designated samples of the unlearning client are used for the forget stream, while the remaining data (including the rest of data on it and the data from other clients), contribute to the retain stream. 
For class-level unlearning, all samples belonging to the unlearning class across clients are used to construct the forget stream, and all non-target-class samples are used to construct the retain stream. 
Thus, different unlearning requests only change how the forget and retain streams are formed, while the layer-wise rectification mechanism remains the same.

\begin{algorithm}[tb]
   \caption{\textsc{Lethe}}
   \label{alg:Lethe}
\begin{algorithmic}[1]
   \STATE {\bfseries Input:} original global model $\boldsymbol{w}_{\mathrm{ori}}$; 
   unlearning data $\mathcal{D}_u$; retained clients $\mathcal{C}_r$; 
   penalty coefficient $\gamma$; adapter steps $S_{\phi}$; rectification rounds $T$; recovery rounds $R$.
   \STATE {\bfseries Output:} unlearned model $\boldsymbol{w}_{\mathrm{un}}$.
   \STATE $\boldsymbol{w}_0 \leftarrow \boldsymbol{w}_{\mathrm{ori}}$.

   \STATE \textit{// Phase 1: Amplify with a temporary adapter}
   \STATE Attach a lightweight temporary adapter $\boldsymbol{\phi}$ to $\boldsymbol{w}_0$.
   \STATE Freeze the backbone $\boldsymbol{w}_0$ and update only $\boldsymbol{\phi}$ by Eq.~\eqref{eq:probe_train}.
   \STATE Obtain the frozen temporary adapter $\boldsymbol{\phi}^{*} \leftarrow \boldsymbol{\phi}^{S_{\phi}}$.

   \STATE \textit{// Phase 2: Rectify via dual-stream update}
   \FOR{$t=0$ {\bfseries to} $T-1$}
      \STATE Form the composite model $(\boldsymbol{w}_t,\boldsymbol{\phi}^{*})$, where $\boldsymbol{\phi}^{*}$ is frozen.
      \STATE Construct the forget stream $\Delta_t^{u}$ from $\mathcal{D}_u$.
      \STATE Construct the retain stream $\Delta_t^{r}$ by aggregating retained-client updates from $\mathcal{C}_r$.
      \FOR{each layer $\ell$}
         \STATE Compute the stream alignment $s_t^{(\ell)}$ by Eq.~\eqref{eq:sim_def}.
         \STATE Obtain the rectified update $\widetilde{\Delta}_t^{(\ell)}$ by Eq.~\eqref{eq:layerwise_rectify}.
      \ENDFOR
      \STATE Update the backbone $\boldsymbol{w}_{t+1}$ by Eq.~\eqref{eq:update}.
   \ENDFOR

   \STATE \textit{// Phase 3: Restore retained utility}
   \STATE Discard the temporary adapter $\boldsymbol{\phi}^{*}$.
   \STATE Run $R$ rounds of FL over the retained clients $\mathcal{C}_r$ and obtain the final global model $\boldsymbol{w}_{\mathrm{un}}$.
   \STATE \textbf{return} $\boldsymbol{w}_{\mathrm{un}}$.
\end{algorithmic}
\end{algorithm}

\subsection{Phase 3: Restore Retained Utility}

After $T$ rectification rounds, the adapter $\boldsymbol{\phi}^{*}$ is removed. To recover retained utility, we further run $R$ rounds of FL on the retained data.

The pseudocode of \textsc{Lethe} is presented in Algorithm~\ref{alg:Lethe}. 

\subsection{Theoretical Analysis}
\label{sec:theory-analysis}
We provide a first-order explanation of why the proposed dual-stream rectification induces effective unlearning while preserving retained utility. 
As defined in Eq.~\eqref{eq:forget_stream} and Eq.~\eqref{eq:retain_stream}, $\Delta_t^u$ denotes the forget stream computed on $\mathcal{D}_u$, and $\Delta_t^r$ denotes the retain stream computed on retained data. 
Since $\Delta_t^u$ is obtained by local training on $\mathcal{D}_u$, it approximates a descent direction of the unlearning-set loss:
\begin{equation}
\Delta_t^u \approx -\eta_u \nabla \mathcal{L}_u(\bw_t).
\end{equation}

Therefore, moving in a direction opposite to $\Delta_t^u$ increases $\mathcal{L}_u$ to first order, thereby reducing the model's fit to $\mathcal{D}_u$. 
Similarly, since $\Delta_t^r$ is induced by retained-data training, maintaining a non-negative alignment with $\Delta_t^r$ helps preserve the utility-oriented update direction.

\begin{proposition}[Correlation consistency]
\label{prop:rectify_correct}
The rectified update simultaneously induces forgetting and utility preservation. Consider round $t$ and layer $\ell$. 
Let $s_t^{(\ell)}$ be the layer-wise alignment defined in Eq.~\eqref{eq:sim_def}, and let $\widetilde{\Delta}_t^{(\ell)}$ be the rectified update defined in Eq.~\eqref{eq:layerwise_rectify}. 
If $s_t^{(\ell)}>0$, assume $\Delta_t^{u,(\ell)}\neq 0$ and choose $\gamma$ such that
\begin{equation}
\frac{s_t^{(\ell)}}{\|\Delta_t^{u,(\ell)}\|^2}
\le
\gamma
\le
\frac{\|\Delta_t^{r,(\ell)}\|^2}{s_t^{(\ell)}} .
\label{eq:gamma_interval}
\end{equation}

If $s_t^{(\ell)}\le 0$, no additional condition on $\gamma$ is needed. 
Then the rectified update satisfies
\begin{equation}
\left\langle
\widetilde{\Delta}_t^{(\ell)},
\Delta_t^{u,(\ell)}
\right\rangle
\le 0,
\qquad
\left\langle
\widetilde{\Delta}_t^{(\ell)},
\Delta_t^{r,(\ell)}
\right\rangle
\ge 0 .
\end{equation}

  That is, the rectified update is non-positively aligned with the forget stream and non-negatively aligned with the retain stream.
\end{proposition}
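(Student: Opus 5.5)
The proof is a direct case split on the sign of $s_t^{(\ell)}$, following the two branches of the rule Eq.~\eqref{eq:layerwise_rectify}. In each branch I expand the two inner products $\langle \widetilde{\Delta}_t^{(\ell)}, \Delta_t^{u,(\ell)}\rangle$ and $\langle \widetilde{\Delta}_t^{(\ell)}, \Delta_t^{r,(\ell)}\rangle$ by bilinearity. I then rewrite them using the definition $s_t^{(\ell)}=\langle \Delta_t^{r,(\ell)},\Delta_t^{u,(\ell)}\rangle$ from Eq.~\eqref{eq:sim_def} and the symmetry of the inner product. To lighten notation I write $a=\Delta_t^{r,(\ell)}$, $b=\Delta_t^{u,(\ell)}$ and $s=\langle a,b\rangle$.

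\textbf{Case $s\le 0$.} Here $\widetilde{\Delta}_t^{(\ell)}=-b$. Then $\langle -b,b\rangle=-\|b\|^2\le 0$, and $\langle -b,a\rangle=-s\ge 0$. No condition on $\gamma$ is used, which matches the statement.

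\textbf{Case $s>0$.} Here $\widetilde{\Delta}_t^{(\ell)}=a-\gamma b$, and the two inner products become:
\begin{equation*}
\langle a-\gamma b, b\rangle = s-\gamma\|b\|^2, \qquad \langle a-\gamma b, a\rangle = \|a\|^2-\gamma s .
\end{equation*}
The first is $\le 0$ exactly when $\gamma\ge s/\|b\|^2$, which is well defined because $b\neq 0$ is assumed. The second is $\ge 0$ exactly when $\gamma\le \|a\|^2/s$, which is well defined because $s>0$. These are precisely the two endpoints of the interval in Eq.~\eqref{eq:gamma_interval}, so any admissible $\gamma$ gives both inequalities at once.

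The only step that needs care is checking that the interval in Eq.~\eqref{eq:gamma_interval} is nonempty, since otherwise the hypothesis would be vacuous. I would argue this from Cauchy--Schwarz: $s\le \|a\|\|b\|$ implies $s^2\le \|a\|^2\|b\|^2$. Dividing by $s\|b\|^2>0$ gives $s/\|b\|^2\le \|a\|^2/s$. So an admissible $\gamma$ always exists, for instance $\gamma=s/\|b\|^2$ or the geometric mean $\|a\|/\|b\|$ of the two endpoints. Combining the two cases proves the proposition for every layer $\ell$ and round $t$.
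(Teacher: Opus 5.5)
Your proof is correct and matches the paper's route: a case split on the sign of $s_t^{(\ell)}$, expansion of the two inner products into $s-\gamma\|b\|^2$ and $\|a\|^2-\gamma s$, and a Cauchy--Schwarz check that the admissible $\gamma$-interval is nonempty. The paper also adds a side remark defining $\gamma_t^{\min}$ as a maximum over layers, so that one $\gamma$ serves all layers at once; the per-layer statement does not need this.
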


\begin{proof}
For clarity, we omit the $(\ell)$ if no ambiguity is caused.

If $s_t\le 0$, \textsc{Lethe} uses the negation branch $\widetilde{\Delta}_t=-\Delta_t^u$. 
Then
\begin{equation}
\left\langle
\widetilde{\Delta}_t,\Delta_t^u
\right\rangle
=
-\|\Delta_t^u\|^2
\le 0 .
\end{equation}

Meanwhile,
\begin{equation}
\left\langle
\widetilde{\Delta}_t,\Delta_t^r
\right\rangle
=
-\left\langle
\Delta_t^u,\Delta_t^r
\right\rangle
=
-s_t
\ge 0 .
\end{equation}

Thus, the conclusion holds in the non-positive alignment branch.

If $s_t>0$, \textsc{Lethe} uses the subtraction branch
$\widetilde{\Delta}_t=\Delta_t^r-\gamma\Delta_t^u$. 
For the forget stream, we have
\begin{equation}
\left\langle
\widetilde{\Delta}_t,\Delta_t^u
\right\rangle
=
\left\langle
\Delta_t^r,\Delta_t^u
\right\rangle
-
\gamma
\|\Delta_t^u\|^2
=
s_t-\gamma\|\Delta_t^u\|^2 .
\end{equation}

Define
\begin{equation}
\gamma_t^{\min}
\triangleq
\max_{\ell:s_t^{(\ell)}>0}
\frac{s_t^{(\ell)}}{\|\Delta_t^{u,(\ell)}\|^2}.
\label{eq:gamma_min}
\end{equation}

If $\gamma \ge \gamma_t^{\min}$, then for every layer $\ell$,
\begin{equation}
\left\langle
\widetilde{\Delta}_t^{(\ell)},
\Delta_t^{u,(\ell)}
\right\rangle
\le 0 .
\end{equation}

For the retain stream, we have
\begin{equation}
\left\langle
\widetilde{\Delta}_t,\Delta_t^r
\right\rangle
=
\left\langle
\Delta_t^r-\gamma\Delta_t^u,\Delta_t^r
\right\rangle
=
\|\Delta_t^r\|^2-\gamma s_t .
\end{equation}

Therefore, if
\begin{equation}
\gamma
\le
\frac{\|\Delta_t^r\|^2}{s_t},
\end{equation}
then
\begin{equation}
\left\langle
\widetilde{\Delta}_t,\Delta_t^r
\right\rangle
\ge 0 .
\end{equation}

The interval in Eq.~\eqref{eq:gamma_interval} is non-empty because
\begin{equation}
s_t^2
=
\left\langle
\Delta_t^r,\Delta_t^u
\right\rangle^2
\le
\|\Delta_t^r\|^2\|\Delta_t^u\|^2
\end{equation}
by the Cauchy--Schwarz inequality. 
Therefore, in both branches, the rectified update is non-positively aligned with the forget stream and non-negatively aligned with the retain stream.
\end{proof}

\begin{figure}[t]
    \centering
    \includegraphics[width=\linewidth]{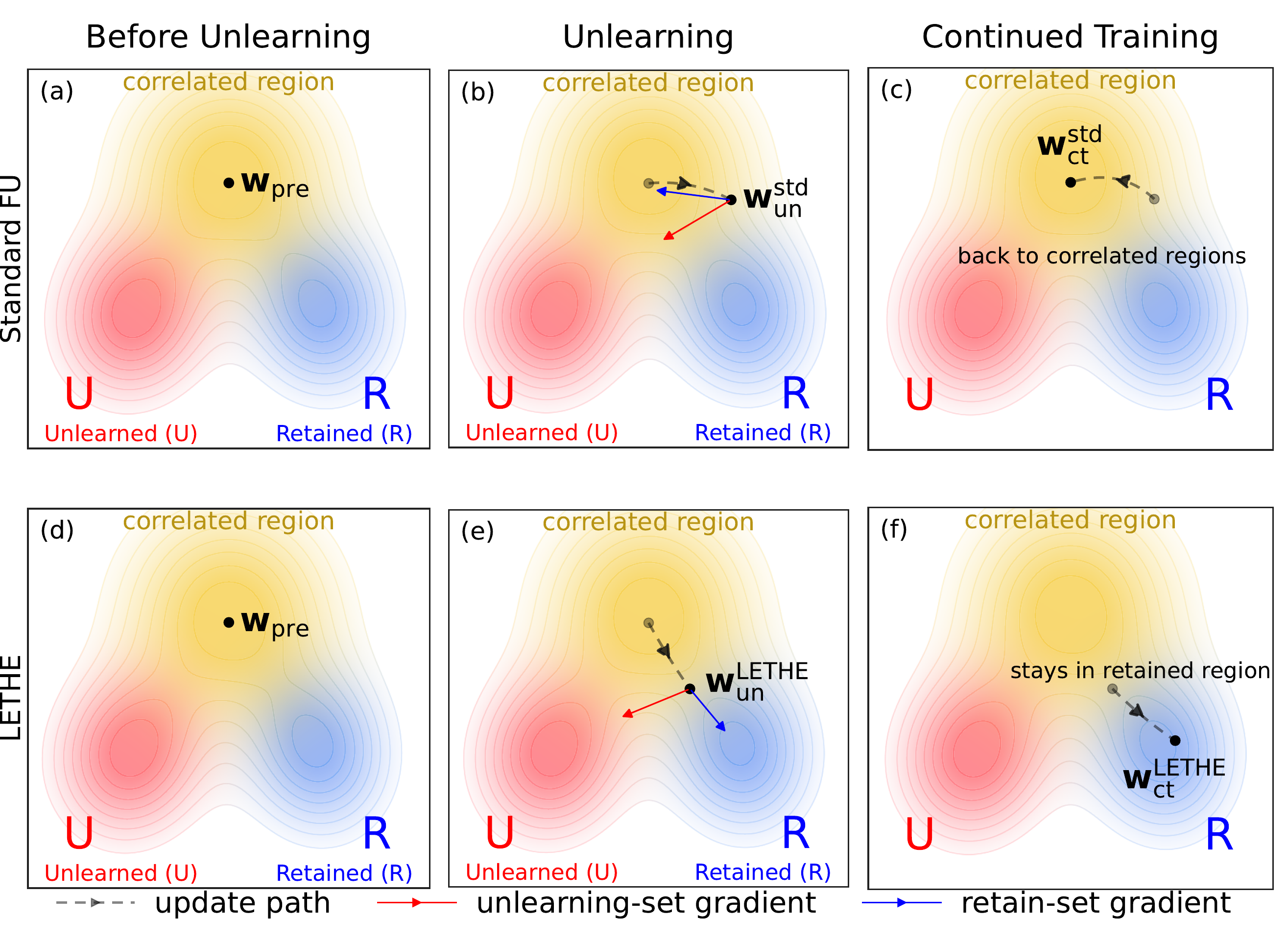}
    \caption{Conceptual illustration of model update in the parameter space. 
    Panels (a) and (d) show the same pre-unlearning state, where the unlearning and retained regions are connected through a correlated region. 
    Panels (b) and (e) compare the unlearning stage: standard FU methods may leave the unlearned model close to the correlated region, while \textsc{Lethe} rectifies the update and moves the model toward a retained-oriented region. 
    Panels (c) and (f) compare continued training: the model after standard FU rolls back to the correlated region, whereas the model unlearned with \textsc{Lethe} tends to stay in the retained region and better preserves the unlearning effect.}
    \label{fig:region}
\end{figure}

Since $\Delta_t^u \approx -\eta_u\nabla \mathcal{L}_u(\bw_t)$, the condition 
$\langle \widetilde{\Delta}_t,\Delta_t^u\rangle\le 0$ implies that the rectified update increases the unlearning-set loss to first order. 
Similarly, because $\Delta_t^r$ approximates a descent direction on retained data, 
$\langle \widetilde{\Delta}_t,\Delta_t^r\rangle\ge 0$ implies that the rectified update is non-adversarial to retained utility to first order. 
Thus, \textsc{Lethe} drives the model away from the direction that fits the unlearning data while staying aligned with the retained-data update direction.

\subsection{Conceptual Interpretation}
\label{sec:conceptual_interpretation}
In Fig.~\ref{fig:region} we visualize how the dual-stream rectification works in the parameter space. 
Before unlearning, the model resides in a correlated region where the loss on both unlearning set and retain set is low (1st column). Existing FU methods can suppress the target behavior immediately after unlearning (panel b), but the model ``rolls back'' during follow-up training (panel c). 
By contrast, \textsc{Lethe} explicitly rectifies the update trajectory during unlearning. \textsc{Lethe} redirects the path of optimization to consistently induce forgetting while preserving utility. By suppressing retain-stream components that are positively aligned with the forget stream, \textsc{Lethe} moves the model away from the correlated region to keep unlearned knowledge de-activated (panels e\&f).

\begin{table}[t]
    \centering
    \caption{Performance comparison under client-level unlearning. 
    \textbf{MU} is higher better, while \textbf{U-KR}, \textbf{CT-KR}, and \textbf{RR} are lower better. 
    Best and second-best results are marked in \textbf{bold} and \underline{underline}, respectively.}
    \label{tab:client_unlearning}
    \scriptsize
    \setlength{\tabcolsep}{3pt}
    \renewcommand{\arraystretch}{0.95}
    \resizebox{\columnwidth}{!}{
    \begin{tabular}{llcccc}
        \toprule
        \textbf{Setting} & \textbf{Method} 
        & \textbf{MU} ($\uparrow$) 
        & \textbf{U-KR} ($\downarrow$) 
        & \textbf{CT-KR} ($\downarrow$) 
        & \textbf{RR} ($\downarrow$) \\
        \midrule

        \multirow{7}{*}{LeNet5}
        & Origin  & 97.86 & 98.99 & ---  & ---  \\
        & Retrain   & 97.59 & 0.31  & 0.56 & 0.00 \\
        \noalign{\vskip 1pt}
        \cdashline{2-6}
        \noalign{\vskip 2pt}
        & FedEraser & 97.93 & \underline{0.25} & \textbf{0.19} & \textbf{0.00} \\
        & FedAU     & \underline{98.34} & 0.62 & 49.37 & 49.59 \\
        & FedOSD    & \textbf{98.45} & 0.65 & 4.18 & {3.68} \\
        & NoT       & 97.73 & 1.02 & 0.37 & \textbf{0.00} \\
        & \textsc{Lethe} 
        & 98.03 
        & \cellcolor{letheblue}{\textbf{0.22}} 
        & \cellcolor{letheblue}{\underline{0.34}} 
        & \cellcolor{letheblue}{\textbf{0.00}} \\
        \midrule

        \multirow{7}{*}{ResNet18}
        & Origin    & 59.97 & 90.77 & ---  & ---  \\
        & Retrain   & 57.83 & 4.00  & 3.22 & 0.00 \\
        \noalign{\vskip 1pt}
        \cdashline{2-6}
        \noalign{\vskip 2pt}
        & FedEraser & 64.45 & 0.22 & \textbf{2.11} & \textbf{0.00} \\
        & FedAU     & \underline{65.90} & \textbf{0.00} & 90.63 & 99.84 \\
        & FedOSD    & \textbf{67.87} & 0.56 & 67.48 & 73.40 \\
        & NoT       & 63.45 & 35.78 & 20.78 & UF \\
        & \textsc{Lethe} 
        & 65.05 
        & \cellcolor{letheblue}{\underline{0.11}} 
        & \cellcolor{letheblue}{\underline{2.44}} 
        & \cellcolor{letheblue}{\textbf{0.00}} \\
        \midrule

        \multirow{6}{*}{HSViT}
        & Origin  & 48.73 & 91.65 & ---  & ---  \\
        & Retrain   & 49.85 & 0.47  & 0.42 & 0.00 \\
        \noalign{\vskip 1pt}
        \cdashline{2-6}
        \noalign{\vskip 2pt}
        & FedAU     & \textbf{49.67} & \underline{0.35} & 21.73 & 23.36 \\
        & FedOSD    & 10.66 & 0.36 & 63.62 & 69.28 \\
        & NoT       & 48.46 & 0.55 & \underline{0.60} & \underline{0.20} \\
        & \textsc{Lethe} 
        & \underline{49.29} 
        & \cellcolor{letheblue}{\textbf{0.10}} 
        & \cellcolor{letheblue}{\textbf{0.45}} 
        & \cellcolor{letheblue}{\textbf{0.03}} \\
        \midrule
    \end{tabular}
    }
    \vspace{0.5mm}
    \footnotesize{
    \textit{Note:} UF denotes unsuccessful forgetting.
    }
\end{table}

\section{Experiments}
\label{sec:exp}

\subsection{Experimental Setting}
\label{sec:exp_setting}

\paragraph{Models and datasets}
We evaluate \textsc{Lethe} under both CV and LLM-based NLP unlearning settings. 
For CV unlearning, we opt for image classification and use MNIST with LeNet-5~\cite{lecun1998gradient}, CIFAR-10~\cite{krizhevsky2009learning} with ResNet-18~\cite{he2016deep}, and Tiny-ImageNet~\cite{tinyimagenet} with HSViT~\cite{xu2024hsvit}. 
For LLM-based NLP unlearning, we use TOFU~\cite{tofu2024} with Llama-3.2-3B-Instruct~\cite{meta2024llama32modelcard}.

\paragraph{Federated setting and unlearning targets}
We evaluate whether an unlearning method can (i) effectively remove the target knowledge associated with the unlearning data and (ii) maintain the effect of unlearning under continued training, thereby resisting \emph{knowledge resurfacing}. 
Following prior FU evaluations~\cite{fedau,pan2025fedosd}, we adopt a Non-IID federated setting.

\textbf{CV task settings.}
For CV tasks, we partition the training data into $K=50$ clients using a Dirichlet distribution $\mathrm{Dir}(\alpha)$ with concentration parameter $\alpha=0.1$, which yields heterogeneous data distributions~\cite{fedau,pan2025fedosd}. 
We evaluate sample-, class-, and client-level unlearning. 

For client-level and sample-level unlearning, we adopt the trigger-based construction~\cite{fedau,pan2025fedosd}, to instantiate the target knowledge to be removed. 
For client-level unlearning, the target client's data are treated as the unlearning set and injected with a shared trigger. 
For sample-level unlearning, we randomly select $20\%$ of the training samples as the unlearning set and apply the same trigger-based construction. 
In both settings, the retained data contain no such trigger. 
For class-level unlearning, the target is to remove the knowledge associated with the target class without applying additional data manipulation.

For LeNet-5 and ResNet-18, we use a batch size of $32$ and a learning rate of $0.001$. 
For HSViT, we use a batch size of $512$. 
In all CV experiments, each client performs one local epoch per communication round.

\begin{figure*}[t]
    \centering
    \includegraphics[width=\textwidth]{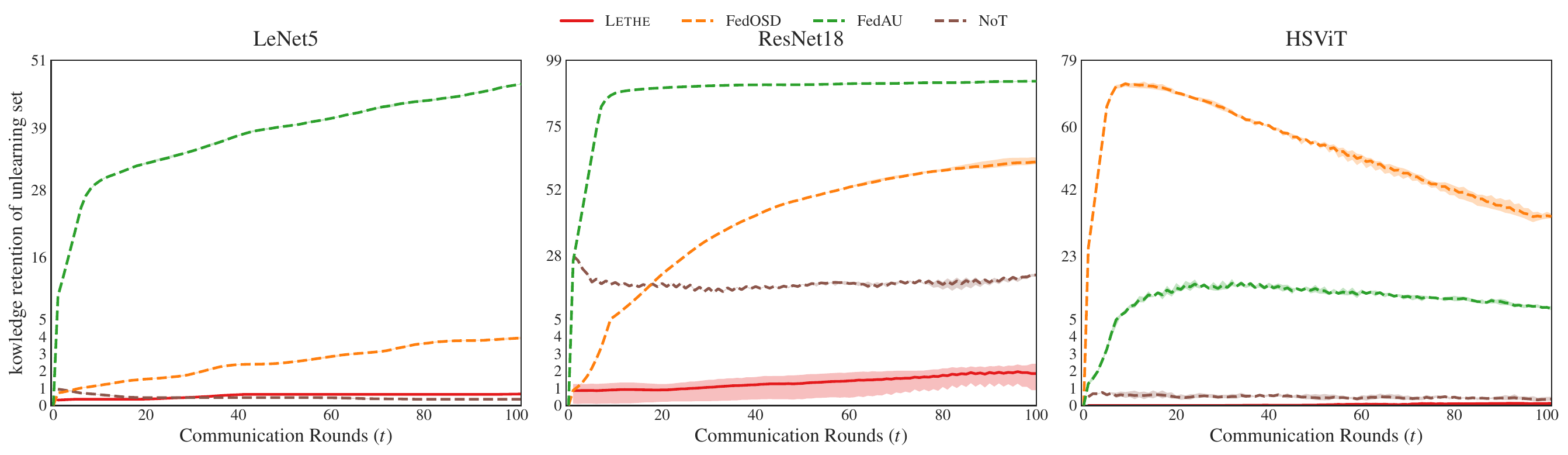}
    \caption{Knowledge retention during continued training on retained data. 
    Methods that suffer from knowledge resurfacing often show a rapid increase in KR within the first few rounds after continued training starts, indicating early reactivation of removed knowledge. 
    In contrast, \textsc{Lethe} shows no noticeable KR increase across HSViT, LeNet-5, and ResNet-18, demonstrating stronger unlearning persistence.}
    \label{fig:kr_continued}
\end{figure*}

\textbf{NLP task settings.}
For NLP-based LLM unlearning, we use TOFU with Llama-3.2-3B-Instruct and adopt a sample-level unlearning setting. 
We partition the TOFU training data into $K=20$ clients and select $20\%$ of the training samples as the unlearning set, without applying any additional data manipulation. 
Instead of full update, we fine-tune the model using LoRA~\cite{hu2022lora}, with rank $r=32$, scaling factor $\alpha_{\mathrm{LoRA}}=64$, and dropout rate $0.05$. 
The learning rate is set to $8\times10^{-5}$, the weight decay is $0.01$, and each client performs five local epochs.

\paragraph{Baselines}
We first train an \textbf{Origin} model on the full training data. 
This model serves as the common starting point for all unlearning methods.
We also include \textbf{Retrain} as the optimal baseline, which trains a model from scratch on the retained data using standard federated learning. Since \textsc{Lethe} is an approximate FU method that targets a practical trade-off between unlearning effectiveness and retained utility, we mainly compare it with representative approximate unlearning methods.

For CV tasks, we compare \textsc{Lethe} with \textbf{FedEraser}~\cite{federaser}, \textbf{FedAU}~\cite{fedau}, \textbf{FedOSD}~\cite{pan2025fedosd}, and \textbf{NoT}~\cite{fednot}. 
\textbf{FedEraser} accelerates retraining by reusing historical local updates, but it is evaluated only for client-level unlearning because it relies on stored historical updates. 
We omit FedEraser on Tiny-ImageNet with HSViT due to the prohibitive storage overhead of saving historical HSViT updates across rounds. 
{FedAU} and {NoT} support sample-, class-, and client-level unlearning, while {FedOSD} is evaluated only for client-level unlearning.

For NLP tasks, we compare with representative LLM unlearning methods, including \textbf{NPO}~\cite{NPO} and \textbf{SimNPO}~\cite{simNPO}. Following~\cite{zhang2026oblivionis}, each participating client locally applies the corresponding NPO or SimNPO objective and uploads the resulting LoRA update to the server once for aggregation. Thus, NPO and SimNPO incur only one communication round in our federated adaptation. 

\paragraph{Evaluation metrics}
We evaluate each method in model utility, knowledge retention, resurfacing rate, communication cost, and computational overhead. 

\textbf{Model Utility (MU)} measures the retained utility of the model (the higher the better). 
For CV tasks, MU is instantiated as the clean test accuracy. 
For NLP tasks, MU is instantiated as the retain set generation quality, computed on the ``Retain'', ``Real Authors'', and ``World Facts'' evaluation subsets following the TOFU protocol~\cite{tofu2024}.

\textbf{Unlearning-phase Knowledge Retention (U-KR)} measures the target knowledge retention immediately after unlearning. A lower value indicates a stronger immediate unlearning effect.

\textbf{Continued-Training Knowledge Retention (CT-KR)} measures the target knowledge retention after continued training on retained data, where a lower value indicates stronger resistance to knowledge recovery. 

\textbf{Resurfacing Rate (RR)}, defined in Eq.~\eqref{eq:rr}, measures the relative recovery of target knowledge during continued training where a lower value indicates stronger unlearning persistence. If a method yields high U-KR, it fails to remove the target knowledge immediately after unlearning; in this case, RR is not meaningful and the result is marked as {Unsuccessful Forgetting ({UF})}.

\textbf{Communication Cost (CM)} measures the number of communication rounds required to complete unlearning. 

\textbf{Floating-Point Operations (FLOPs)} measures the computation overhead of the unlearning process.

Importantly, \textbf{U-KR alone only reflects the immediate effect of unlearning} and may lead to an {illusion of unlearning}; therefore, CT-KR and RR are necessary for evaluating unlearning persistence under continued training.

\begin{figure}[tb]
    \centering
    \includegraphics[width=\linewidth]{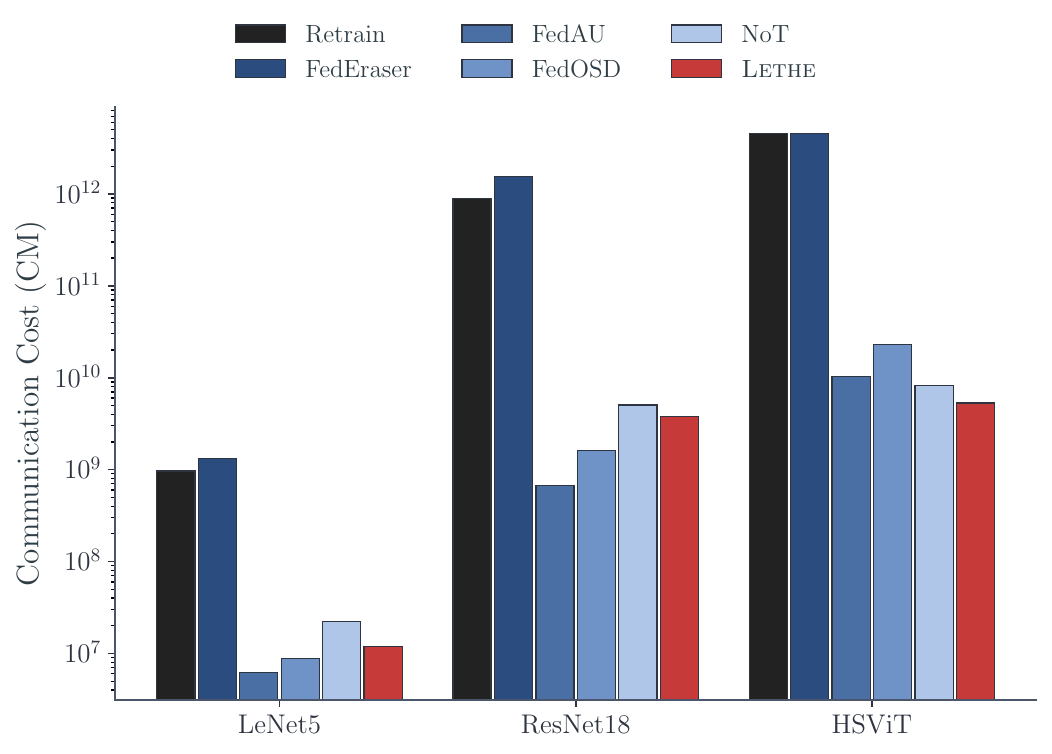}
    \caption{Communication cost on CV tasks. 
    Compared with full retraining and FedEraser, \textsc{Lethe} substantially reduces the communication overhead. 
    }
    \label{fig:communication}
\end{figure}

\begin{figure}[tb]
    \centering
    \includegraphics[width=\linewidth]{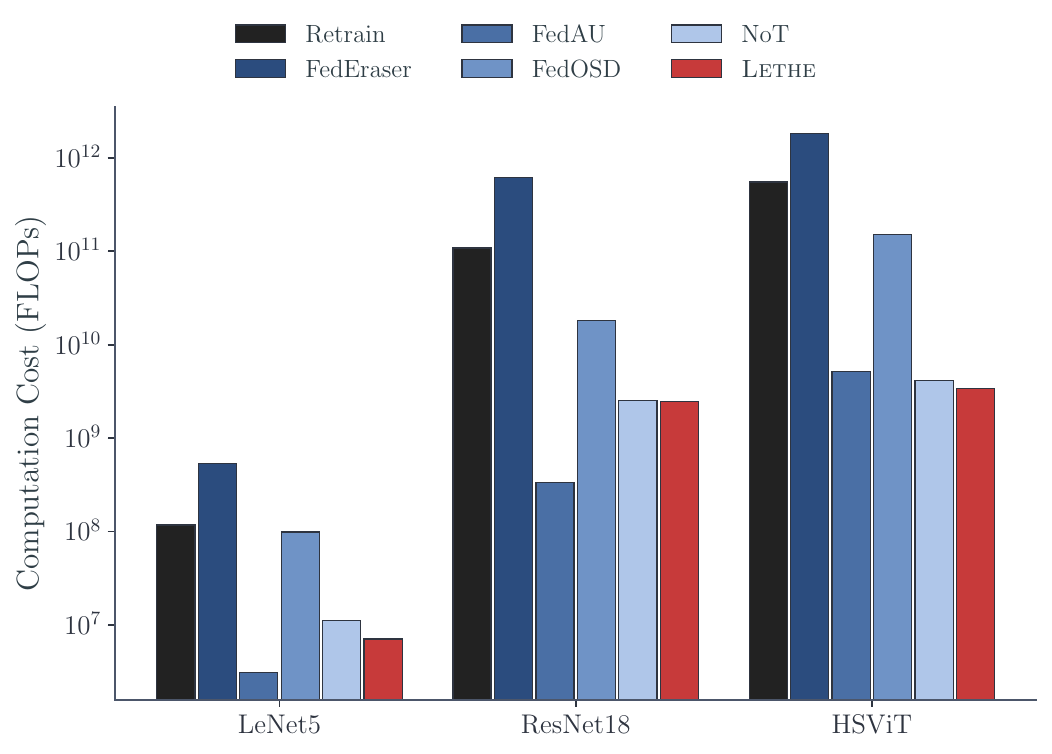}
    \caption{Computation overhead on CV tasks. 
    \textsc{Lethe} requires far fewer FLOPs than full retraining and FedEraser.
    }
    \label{fig:flops}
\end{figure}

\subsection{Results on CV Tasks}
\label{sec:cv_tasks}

\subsubsection{Client-Level unlearning}
\label{sec:client_unlearning}

Table~\ref{tab:client_unlearning} reports the client-level unlearning results on LeNet-5, ResNet-18, and HSViT. 
Overall, \textsc{Lethe} consistently achieves strong unlearning persistence under continued training while maintaining competitive model utility. 
Across the three backbones, \textsc{Lethe} keeps CT-KR close to the clean retraining baseline, whereas several baselines exhibit clear knowledge resurfacing after continued training. 
In particular, FedAU achieves low U-KR immediately after unlearning but show much higher CT-KR later, which justifies that U-KR (i.e., u-set accuracy in the literature) alone cannot reveal the persistence of unlearning.

\textbf{Efficiency analysis:}
Fig.~\ref{fig:communication} and Fig.~\ref{fig:flops} compare the communication and computational costs of different methods, respectively. 
\textsc{Lethe} substantially reduces both costs compared with full retraining and FedEraser, while avoiding historical-update storage. 
Although some lightweight baselines have lower costs in certain settings, they often suffer from higher CT-KR or RR, as shown in Table~\ref{tab:client_unlearning}. 
Therefore, \textsc{Lethe} achieves a better trade-off among efficiency, retained utility, and unlearning persistence.

\textbf{Persistence under continued training:}
Fig.~\ref{fig:kr_continued} tracks the level of knowledge retention (KR) during continued training on retained data. 
A clear pattern is that methods suffering from knowledge resurfacing often exhibit a sharp increase in KR within the first few rounds after continued training starts. 
For example, FedOSD rapidly recovers target knowledge on HSViT, while FedAU shows strong early-stage resurfacing on LeNet-5 and ResNet-18. 
This suggests that removed knowledge can be reactivated very early once retained-data training resumes. 
In contrast, \textsc{Lethe} shows no noticeable increase in KR across all three models, indicating strong unlearning persistence.

\begin{table}[tb]
    \centering
    \caption{Performance comparison on class unlearning.}
    \label{tab:class_unlearning}
    \scriptsize
    \setlength{\tabcolsep}{3pt}
    \renewcommand{\arraystretch}{1.05}
    \resizebox{\columnwidth}{!}{
    \begin{tabular}{lcccccc}
        \toprule
        \textbf{Method} 
        & \textbf{CM} ($\downarrow$) 
        & \textbf{FLOPs} ($\downarrow$) 
        & \textbf{MU} ($\uparrow$) 
        & \textbf{U-KR} ($\downarrow$) 
        & \textbf{CT-KR} ($\downarrow$) 
        & \textbf{RR} ($\downarrow$) \\
        \midrule
        Origin  & $7.15e10$ & $1.33e16$ & 67.64 & 100.00 & ---  & --- \\
        Retrain & $1.23e11$ & $2.07e16$ & 67.64 & 0.00   & 0.00 & 0.00 \\
        \noalign{\vskip 2pt}
        \hdashline
        \noalign{\vskip 2pt}
        FedAU   & $\mathbf{1.79e9}$  & $\mathbf{3.00e14}$     & $\underline{67.87}$ & $\mathbf{0.00}$ & $\mathbf{0.00}$ & $\mathbf{0.00}$ \\
        NoT     & $8.05e9$           & $1.35e15$              & $\mathbf{69.27}$ & $\mathbf{0.00}$ & $\mathbf{0.00}$ & $\mathbf{0.00}$ \\
        \textsc{Lethe}   & $\underline{7.15e9}$ & $\underline{1.20e15}$ & 67.01 & $\mathbf{0.00}$ & $\mathbf{0.00}$ & $\mathbf{0.00}$ \\
        \bottomrule
    \end{tabular}
    }
\end{table}

\subsubsection{Class-Level Unlearning}
\label{sec:class_unlearning}
Table~\ref{tab:class_unlearning} reports the class-level unlearning results. 
Compared with client- and sample-level unlearning, class-level removal is relatively easier in this setting. 
All evaluated methods achieve near-perfect removal of the target class, with near-zero U-KR, CT-KR, and RR. 
This suggests that when the target knowledge is clearly separated by class labels, existing unlearning methods can remove it more reliably. 
Nevertheless, \textsc{Lethe} maintains competitive utility and comparable unlearning performance, showing its applicability across different CV unlearning granularities.

\begin{table}[tb]
    \centering
    \caption{Performance comparison on sample unlearning.}
    \label{tab:sample_unlearning}
    \scriptsize
    \setlength{\tabcolsep}{3pt}
    \renewcommand{\arraystretch}{1.05}
    \resizebox{\columnwidth}{!}{
    \begin{tabular}{lcccccc}
        \toprule
        \textbf{Method} 
        & \textbf{CM} ($\downarrow$) 
        & \textbf{FLOPs} ($\downarrow$) 
        & \textbf{MU} ($\uparrow$) 
        & \textbf{U-KR} ($\downarrow$) 
        & \textbf{CT-KR} ($\downarrow$) 
        & \textbf{RR} ($\downarrow$) \\
        \midrule
        Origin  & $8.94e10$ & $1.67e16$ & 66.33 & 82.80 & ---   & --- \\
        Retrain & $1.21e11$ & $2.25e16$ & 64.65 & 2.40  & 2.35  & 0.00 \\
        \noalign{\vskip 2pt}
        \hdashline
        \noalign{\vskip 2pt}
        FedAU   & $\mathbf{6.44e9}$  & $\mathbf{1.20e15}$     & 53.40 & $\mathbf{0.40}$ & 61.80 & 73.90 \\
        NoT     & $1.05e10$          & $1.95e15$              & $\underline{59.14}$ & 2.80 & $\underline{12.20}$ & $\underline{12.24}$ \\
        \textsc{Lethe}   & $\underline{8.05e9}$ 
        & $\underline{1.50e15}$ 
        & $\mathbf{66.30}$
        & \cellcolor{letheblue}{$\underline{0.80}$}
        & \cellcolor{letheblue}{$\mathbf{1.40}$} 
        & \cellcolor{letheblue}{$\mathbf{0.00}$} \\
        \bottomrule
    \end{tabular}
    }
\end{table}

\subsubsection{Sample-Level Unlearning}
\label{sec:sample_unlearning}

Table~\ref{tab:sample_unlearning} reports the sample-level unlearning results. 
\textsc{Lethe} achieves the best overall trade-off between retained utility and unlearning persistence. 
Although some baselines can suppress target knowledge immediately after unlearning, their CT-KR increases markedly during continued training, showing that the unlearning effect is not persistent. 
In contrast, \textsc{Lethe} maintains low CT-KR and RR while preserving high MU, indicating that its unlearning effect remains stable after subsequent retained-data training.

\begin{figure}[t]
    \centering
    \includegraphics[width=\linewidth]{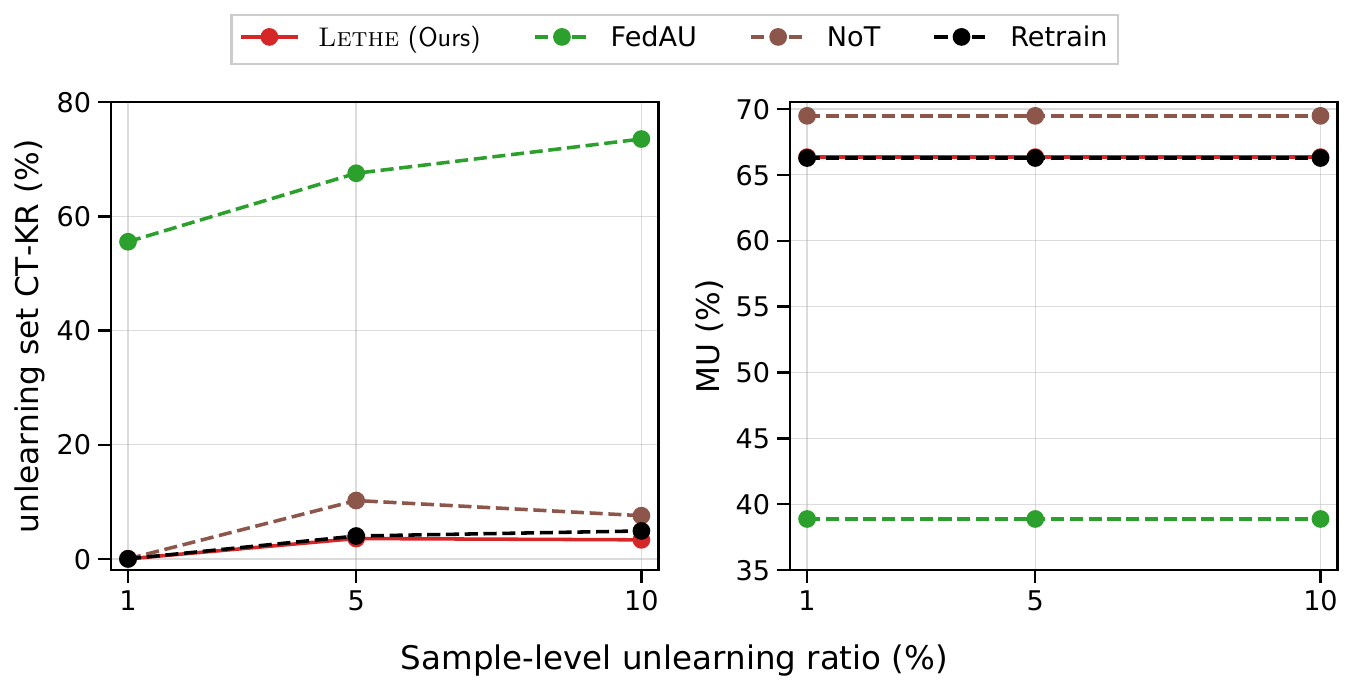}
    \caption{Robustness under different sample-level unlearning ratios. 
    We vary the unlearning ratio from $1\%$ to $5\%$ and $10\%$, and report (a) CT-KR on the unlearning set and (b) Model Utility (MU). 
    \textsc{Lethe} keeps the unlearning-set CT-KR close to Retrain and maintains comparable MU across all ratios.}
    \label{fig:sample_ratio}
\end{figure}

\textbf{Robustness to sample-level unlearning ratio:}
We further evaluate the robustness of different methods under varying sample-level unlearning ratios. 
Specifically, we vary the proportion of samples to be unlearned from $1\%$ to $5\%$ and $10\%$, and report CT-KR and MU in Fig.~\ref{fig:sample_ratio}. 
As the unlearning ratio increases, FedAU exhibits consistently high CT-KR, indicating that it fails to reliably remove the target knowledge under different removal scales. 
NoT keeps CT-KR relatively low, but its KR remains slightly higher than the retraining baseline and shows noticeable fluctuation across ratios. 
In contrast, \textsc{Lethe} maintains CT-KR close to Retrain across all ratios while preserving comparable MU.

\begin{table}[t]
    \centering
    \caption{Performance comparison on LLM unlearning. \textbf{MU} is higher better, while \textbf{CM}, \textbf{FLOPs}, \textbf{U-KR}, \textbf{CT-KR}, and \textbf{RR} are lower better. Best and second-best results among unlearning methods are marked in \textbf{bold} and \underline{underline}, respectively.}
    \label{tab:llm_unlearning}
    \scriptsize
    \setlength{\tabcolsep}{3pt}
    \renewcommand{\arraystretch}{1.05}
    \resizebox{\columnwidth}{!}{
    \begin{tabular}{lcccccc}
        \toprule
        \textbf{Method} 
        & \textbf{CM} ($\downarrow$) 
        & \textbf{FLOPs} ($\downarrow$) 
        & \textbf{MU} ($\uparrow$) 
        & \textbf{U-KR} ($\downarrow$) 
        & \textbf{CT-KR} ($\downarrow$) 
        & \textbf{RR} ($\downarrow$) \\
        \midrule
        Origin  & $9.73e10$ & $2.00e18$ & 0.5396 & 0.4837 & ---    & --- \\
        Retrain & $1.05e11$ & $1.73e18$ & 0.5385 & 0.3470 & 0.3967 & 0.00 \\
        \noalign{\vskip 2pt}
        \hdashline
        \noalign{\vskip 2pt}
        NPO     & $\mathbf{3.89e9}$ & $\mathbf{1.60e16}$ & 0.5114 & 0.3592 & $\underline{0.4537}$ & $\underline{65.52}$ \\
        simNPO  & $\mathbf{3.89e9}$ & $\mathbf{1.60e16}$ & $\mathbf{0.5513}$ & $\underline{0.3588}$ & 0.4684 & 82.41 \\
        \textsc{Lethe}   & $\underline{2.72e10}$ 
                & $\underline{4.49e17}$ 
                & $\underline{0.5430}$ 
                & \cellcolor{letheblue}{$\mathbf{0.3345}$} 
                & \cellcolor{letheblue}{$\mathbf{0.3957}$} 
                & \cellcolor{letheblue}{$\mathbf{0.00}$} \\
        \bottomrule
    \end{tabular}
    }
\end{table}

\subsection{Results on NLP Tasks}
For NLP tasks, Table~\ref{tab:llm_unlearning} shows that \textsc{Lethe} achieves stronger persistence than NPO and SimNPO. 
Due to the semantic generalization ability of LLMs, KR may also increase for the clean retraining baseline after continued training; thus, Retrain serves as a reference for unavoidable generalization-induced recovery. 
Compared with NPO and SimNPO, \textsc{Lethe} keeps CT-KR closer to the retraining reference while maintaining competitive MU, indicating stronger unlearning persistence. 
Although NPO and SimNPO have lower CM and FLOPs due to restricted scope of client engagement, they show weaker persistence. 
In contrast, \textsc{Lethe} significantly stronger persistence (zero RR) at a cost much lower than retraining.

\begin{figure}[t]
    \centering
    \includegraphics[width=\linewidth]{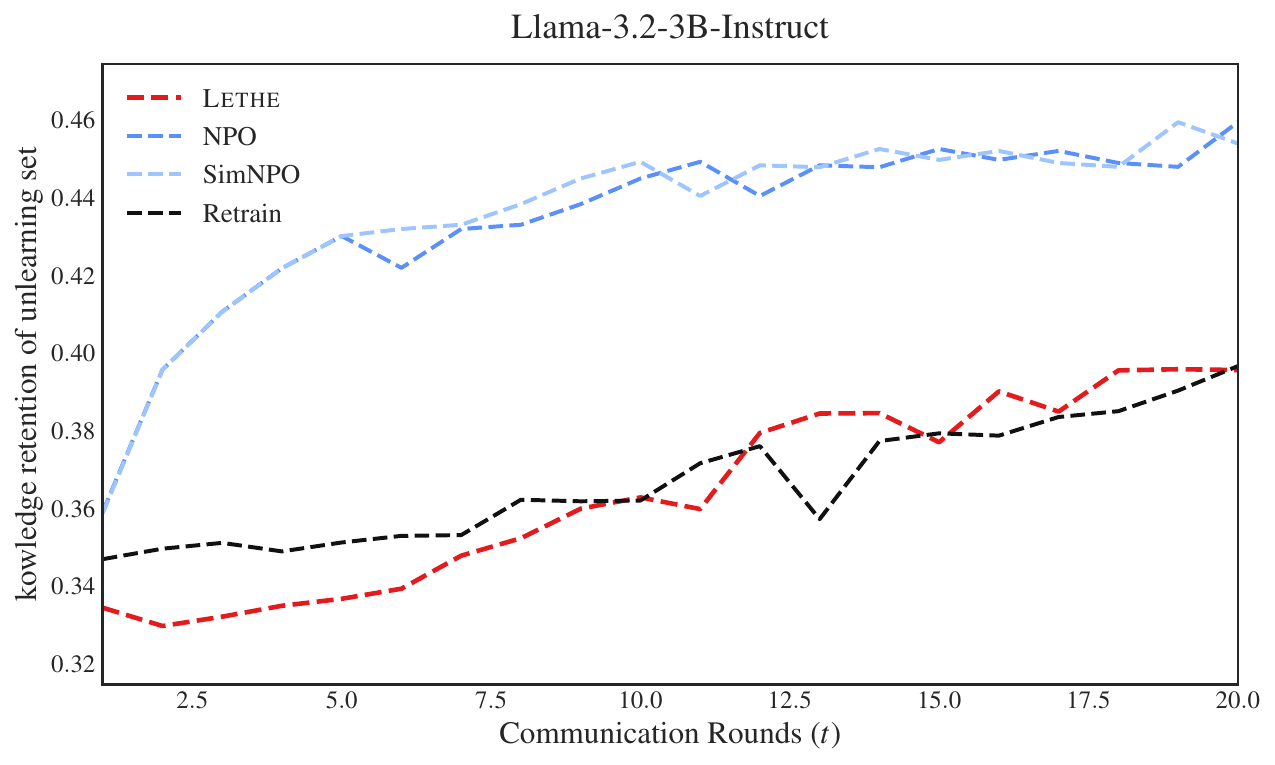}
    \caption{Persistence analysis on NLP tasks during continued training.
    We report the knowledge retention (KR) curves after unlearning.
    Compared with NPO and simNPO, \textsc{Lethe} remains close to the Retrain reference and exhibits a more stable KR trajectory, indicating stronger unlearning persistence.}
    \label{fig:nlp_ftr_curves}
\end{figure}

\textbf{Persistence under continued training:}
Similar to the observation in CV tasks, Fig.~\ref{fig:nlp_ftr_curves} shows that methods suffering from knowledge resurfacing tend to exhibit a sharp increase in KR within the first few rounds after continued training starts. 
In contrast, \textsc{Lethe} shows a more stable curve and remains closer to the clean retraining reference, suggesting stronger unlearning persistence.
It is worth noting that NLP tasks involve strong semantic generalization. 
Therefore, even the clean retraining baseline may show a mild increase in KR after continued training.
The key comparison is whether the KR climbs substantially beyond this retraining reference. 
From this perspective, the sharp KR increase of NPO and SimNPO indicates severe knowledge resurfacing, whereas \textsc{Lethe} shows the smallest gap to the retraining baseline.

\begin{figure}[t]
    \centering
    \includegraphics[width=\linewidth]{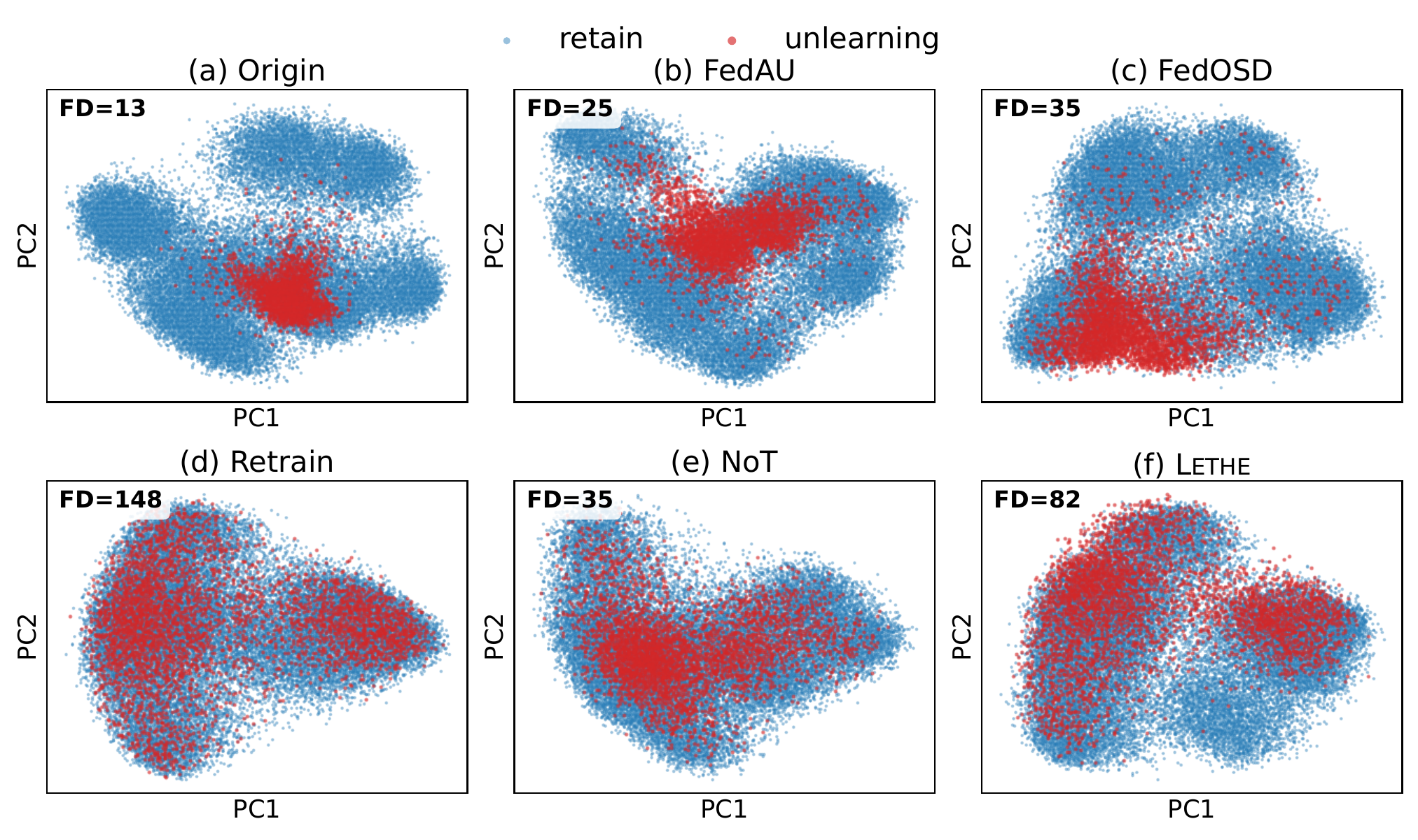}
    \caption{PCA visualization of retained samples and triggered unlearning-set samples immediately after unlearning. 
    (a) The original model forms a compact trigger-induced cluster. 
    (b), (c), and (e) Existing FU baselines still retain noticeable clustering, indicating residual target knowledge. 
    (d) Retraining provides the dispersed gold-standard reference. 
    (f) \textsc{Lethe} produces a more dispersed distribution closer to retraining, suggesting more effective removal of trigger-related knowledge.}
    \label{fig:pca}
\end{figure}

\subsection{Low-level Examination of Knowledge Resurfacing}
\label{sec:analysis}
Next, we analyze why knowledge resurfacing occurs and why \textsc{Lethe} mitigates it from three microscopic perspectives. 
Without loss of generality, we characterize knowledge resurfacing on CV tasks. Following prior FU evaluations~\cite{fedau,pan2025fedosd}, we use a trigger-based construction to instantiate target knowledge. 

\textbf{Representation structure after unlearning:}
We use two-dimensional Principal Component Analysis (PCA)~\cite{PCA} to examine whether an unlearning method corrects the underlying feature representations associated with the target knowledge, rather than merely suppressing the output behavior. 
Specifically, we extract the penultimate-layer feature representations of unlearning-set samples and project them onto the first two principal components. We further report Forgetting Dispersion (FD)~\cite{NEURIPS2024_16e18fa3}, defined as the average squared distance between each unlearning sample representation and the feature center of the unlearning set. A larger FD indicates a more dispersed unlearning-set representation, while a smaller FD indicates a more compact representation. 

As shown in Fig.~\ref{fig:pca}, existing unlearning methods still preserve compact trigger-related structures after unlearning and obtain relatively low FD. This suggests that existing methods mainly suppress superficial output behavior, while leaving deeper feature representations associated with the target knowledge insufficiently corrected, making the target knowledge easier to recover during continued training. In contrast, the retraining baseline has never learned this trigger and generates a more dispersed distribution. Note that \textsc{Lethe} also produces a relatively dispersed distribution with high FD, which suggests ``deconstruction'' of target pattern and strong effect of unlearning.

\begin{figure}[t]
\centering
\includegraphics[width=\linewidth]{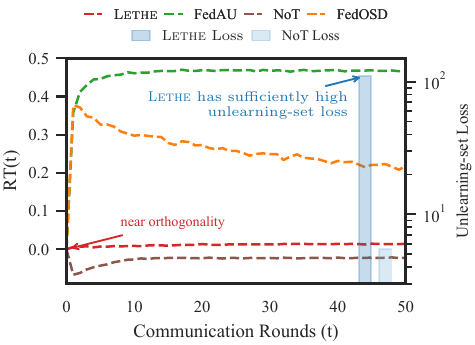}
\caption{Rollback tendency (RT) during continued training. 
Positive RT values indicate that training moves the model toward the original model. \textsc{Lethe} remains near-orthogonal to the rollback direction, whereas FedAU and FedOSD show stronger rollback tendency.}
\label{fig:sim_step}
\end{figure}

\textbf{Rollback during continued training:}
Motivated by cosine-based task-vector diagnostics~\cite{wang2025invariance}, we examine whether continued training tends to move the unlearned model back toward the pre-unlearning model. 
Let $\bw_{\mathrm{ori}}$ and $\bw_{\mathrm{un}}$ denote the original global model and the unlearned model, respectively. 
We define $\bw_{\mathrm{ori}}-\bw_{\mathrm{un}}$ as the \emph{rollback vector}. 
At round $t$ of continued training, let $\Delta w_t$ be the global update induced by retained-data training. 
We define the \emph{rollback tendency (RT)} as the alignment between $\Delta w_t$ and the rollback vector:
\begin{equation}
\mathrm{RT}(t) \triangleq
\frac{
\langle \Delta \bw_t,\; \bw_{\mathrm{ori}}-\bw_{\mathrm{un}}\rangle
}{
\|\Delta \bw_t\|_2 \, \|\bw_{\mathrm{ori}}-\bw_{\mathrm{un}}\|_2
}.
\label{eq:rollback_cos}
\end{equation}

A positive $\mathrm{RT}(t)$ indicates that continued training moves the model toward the pre-unlearning region, partially undoing the unlearning effect.

As shown in Fig.~\ref{fig:sim_step}, FedAU and FedOSD exhibit positive RT values, indicating a stronger tendency to move back toward the original model during continued training. 
This suggests that retained-data updates can pull these models toward the pre-unlearning region, where the removed target knowledge may be recovered. 
In contrast, \textsc{Lethe} keeps RT close to zero throughout continued training, indicating weak rollback tendency. 
This supports the role of Phase~2 in Algorithm~\ref{alg:Lethe}: the dual-stream rectification redirects the updates away from the correlated region, producing an unlearned model that is less likely to be pulled back by later retained-data training. 
NoT also suppresses rollback, but its unlearning-set loss remains low, indicating insufficient knowledge removal.

\textbf{Linear mode connectivity analysis.}
\label{para:lmc_analysis}
We further perform linear mode connectivity (LMC) analysis~\cite{frankle2020linear}. 
Recent work~\cite{siddiqui2025dormant} suggests that a model more resistant to knowledge resurfacing should exhibit a high-loss barrier between the unlearned model and a model where the removed knowledge is recovered. 
A smooth low-loss path indicates that the removed knowledge can be easily recovered, whereas a high-loss barrier suggests stronger resistance to knowledge resurfacing. 
In our setting, we analyze the interpolation path between the unlearned model $w_{\mathrm{un}}$ and a recovery model $w_{\mathrm{rec}}$, where $w_{\mathrm{rec}}$ is obtained by further training on the full data until convergence. 
As shown in Fig.~\ref{fig:lmc}, only \textsc{Lethe} exhibits a clear high-loss barrier along this path, while the other methods transition smoothly with low loss. 
This explains why \textsc{Lethe} is more resistant to knowledge resurfacing: the removed knowledge is harder to recover through a smooth optimization path.

\begin{figure*}[t]
    \centering
    \includegraphics[width=0.95\textwidth]{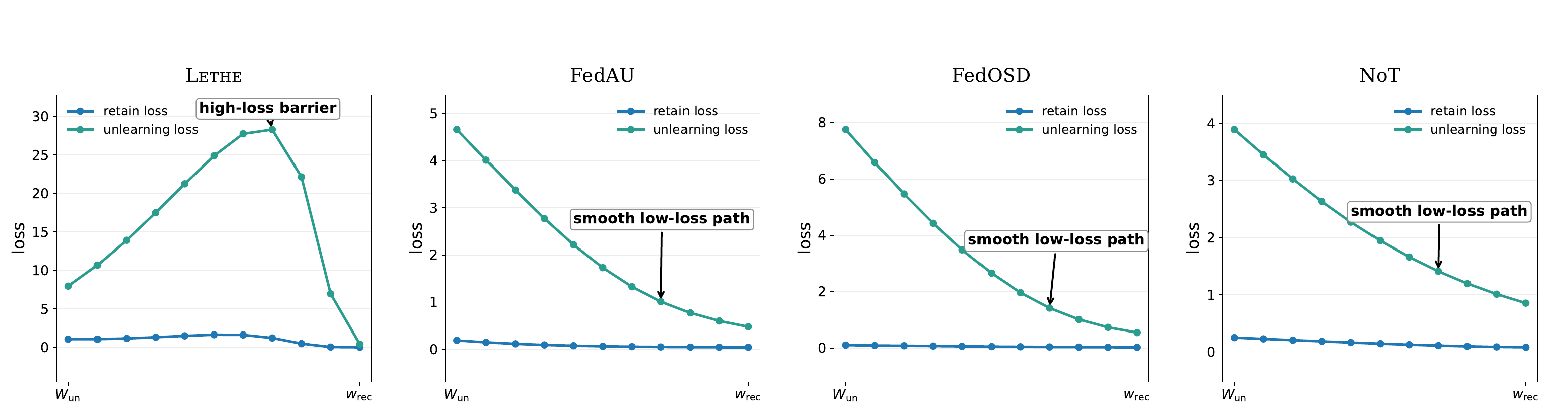}
    \caption{Recovery-path LMC between the unlearned model $w_{\mathrm{un}}$ and the recovery model $w_{\mathrm{rec}}$. 
    Existing methods exhibit smooth low-loss paths on the unlearning set, indicating that the removed knowledge can be easily recovered. 
    In contrast, \textsc{Lethe} forms a pronounced high-loss barrier, suggesting stronger resistance to knowledge resurfacing.}
    \label{fig:lmc}
\end{figure*}

\subsection{Long-horizon continued training}
We further conduct a long-horizon continued-training test to examine whether the unlearning effect remains persistent after a extremely long run of follow-up training. 
Specifically, we run continued training for 50 rounds on the NLP task and 1000 rounds on the CV-ResNet18 task, where the models have largely converged. 
As shown in Table~\ref{tab:long_horizon}, \textsc{Lethe} keeps KR close to the clean retraining baseline in both settings. 
On NLP tasks, although \textsc{Lethe} has a slightly higher KR than Retrain, it also achieves noticeably higher MU. 
This suggests that the small KR gap may partly come from stronger semantic generalization rather than direct recovery of memorized target samples. 
Overall, \textsc{Lethe}  maintains the unlearning effect even after long-horizon continued training.

\section{Limitations and Discussion}

One limitation of \textsc{Lethe} is that it requires the unlearning client to participate during the unlearning stage to provide the forget stream. 
This assumption may not hold if the client completely leaves the federation and becomes unreachable. 
However, similar assumptions are also adopted in existing FU methods, such as FedAU and FedOSD, where the unlearning client participates in the unlearning procedure to provide information or updates~\cite{fedau,pan2025fedosd}. 
Moreover, this setting is reasonable in practical collaborations where a client wants to remove outdated or sensitive data but does not intend to quit the federation. 
For example, a subsidiary company may request data removal while still benefiting from the shared global model. 
In such cases, the client is naturally willing to cooperate in the unlearning process, and the remaining clients can continue federated training afterward.

\begin{table}[t]
\centering
\caption{Long-horizon continued training after unlearning. We run continued training until convergence and compare \textsc{Lethe} with the clean retraining baseline. For CV tasks with ResNet-18, we train for 1000 rounds; for NLP tasks, we train for 50 rounds.}
\label{tab:long_horizon}
\tiny
\setlength{\tabcolsep}{3pt}
\renewcommand{\arraystretch}{0.9}
\resizebox{0.75\columnwidth}{!}{
\begin{tabular}{lcccc}
\toprule
\multirow{2}{*}{\textbf{Method}} 
& \multicolumn{2}{c}{\textbf{CV-ResNet18}} 
& \multicolumn{2}{c}{\textbf{NLP}} \\
\cmidrule(lr){2-3} \cmidrule(lr){4-5}
& \textbf{MU} ($\uparrow$) 
& \textbf{KR} ($\downarrow$) 
& \textbf{MU} ($\uparrow$) 
& \textbf{KR} ($\downarrow$) \\
\midrule
Retrain 
& 62.80 
& 3.11
& 0.5695 
& {0.4046} \\
\textsc{Lethe} 
& {62.94} 
& {2.22} 
& {0.6122} 
& 0.4167 \\
\bottomrule
\end{tabular}
}
\vspace{-2mm}
\end{table}

\section{Conclusion}
\label{sec:conclusion}
We study FU in a deployment-realistic setting where deletion requests arrive amid long-lasting federated training. 
We identify \emph{knowledge resurfacing} as a key failure mode: target knowledge removed by unlearning can resurface within only a few additional communication rounds. 
Beyond existing measures of unlearning effectiveness, we introduce \emph{Resurfacing Rate}, which characterize the degradation of the unlearning effect during continued training. 
Extensive experiments across diverse models, datasets, and unlearning settings show that many state-of-the-art methods are prone to knowledge resurfacing, whereas \textsc{Lethe} maintains a long-lasting unlearning effect throughout follow-up training. 
Nevertheless, \textsc{Lethe} still requires the participation of the unlearning client to provide the forget stream and relies on retained clients to help restore model utility. 
Reducing such participation requirements and developing more autonomous calibration of the rectification strength with stronger theoretical guarantees are important directions for our future work.

\section*{Acknowledgement}
This work is supported by the New Generation Artificial Intelligence-National Science and Technology Major Project (2025ZD0123605, 2025ZD0123604), National Natural Science Foundation of China (62402198), Basic and Applied Basic Research Project of Guangzhou (2025A04J2212), the Fundamental Research Funds for the Central Universities (21624348).

An earlier preprint version of this article entitled ``Lethe: Adapter-Augmented Dual-Stream Update for Persistent Knowledge Erasure in Federated Unlearning'' was posted on arXiv (DOI arXiv:2601.22601).
     
\bibliographystyle{IEEEtran}
\bibliography{allreference}

\onecolumn

\end{document}